\newtheorem{theorem}{Theorem}
\newcommand{\bfu}{\mathbf{u}}
\title{PDE-GCN: Novel Architectures for Graph Neural Networks Motivated by Partial Differential Equations}
\author{%
  Moshe Eliasof \\
  Department of Computer Science\\
  Ben-Gurion University of the Negev\\
  Beer-Sheva, Israel\\
  \texttt{eliasof@post.bgu.ac.il} \\
  \And
  Eldad Haber \\
  Department of Earth, Ocean and Atmospheric Sciences\\
  University of British Columbia \\
  Vancouver, Canada\\
  \texttt{ehaber@eoas.ubc.ca} \\
  \AND
  Eran Treister \\
  Department of Computer Science\\
  Ben-Gurion University of the Negev\\
  Beer-Sheva, Israel\\
  \texttt{erant@cs.bgu.ac.il} \\
}
\begin{document}

\maketitle

\begin{abstract}
Graph neural networks are increasingly becoming the go-to approach in various fields such as computer vision, computational biology and chemistry, where data are naturally explained by graphs. However, unlike traditional convolutional neural networks, deep graph networks do not necessarily yield better performance than shallow graph networks. 
This behavior usually stems from the over-smoothing phenomenon. In this work, we propose a family of architectures
to control this behavior by design. Our networks are motivated by numerical methods for solving Partial Differential Equations (PDEs) on manifolds, and as such, their behavior can be explained by similar analysis. Moreover, as we demonstrate using an extensive set of experiments, our PDE-motivated networks can generalize and be effective for various types of problems from different fields. Our architectures obtain better or on par with the current state-of-the-art results for problems that are typically approached using different architectures.
\end{abstract}

\section{Introduction}
\label{sec:intro}
In recent years, Graph Convolutional Networks (GCNs) \cite{bruna2013spectral, defferrard2016convolutional, kipf2016semi} have drawn the attention of researchers and practitioners in a variety of domains and applications, ranging from computer vision and graphics \cite{monti2017geometric, wang2018dynamic, li2019deepgcns, hanocka2019meshcnn, eliasof2020diffgcn} to computational biology \cite{Strokach2020, gaudelet2020utilising,eliasof2021mimetic}, recommendation systems \cite{ying2018graph} and social network analysis \cite{qiu2018deepinf, li2019encoding}. 
However, GCNs still suffer from two main problems. First, they are usually \textit{shallow} as opposed to the concept of deep convolutional neural networks (CNNs) \cite{KrizhevskySutskeverHinton2012, he2016deep} due to the \textit{over-smoothing} phenomenon \cite{Zhao2020PairNorm:, measuringoversmoothing, chen20simple}, where the node feature vectors become almost identical, such that they are indistinguishable, which yields non-optimal performance.  
Furthermore, GCNs are typically customized to a specific domain and application. That is, 
as we demonstrate in Sec. \ref{sec:generalizationExperiment}, a successful point-cloud classification network \cite{wang2018dynamic} can perform poorly on a citation graph node-classification problem \cite{chen20simple}, and vice-versa. 
Furthermore, because many GCNs lack theoretical guarantees, it is difficult to reason about their success on one problem and lack on others. These observations motivate us to develop a profound understanding of graph networks and their dynamics. 

To this end, we suggest a novel, universal approach to the design of GCN architectures. 
Our inspiration stems from the similarities and equivalence between Partial Differential Equations (PDEs) and deep networks explored in \cite{haber2017stable, Chang2017Reversible, ruthotto2019deep}. Furthermore, as GCNs can be thought of as a generalization of CNNs, and a standard convolution can be represented as a combination of differential operators on a structured grid \cite{ruthotto2019deep}, we adopt this interpretation to explore versions of GCNs as PDEs on graphs or manifolds. We therefore call our network architectures \textit{PDE-GCN}, and demonstrate 
that our approach is general with respect to the given task. That is, our architectures behave similarly for different problems, and their performance is on par or better than other domain-specific GCNs. Furthermore, our family of architectures are backed by theoretical guarantees that allow us to explain the behavior of the GCNs in some of the results that we present. To be more specific, our contribution is as follows:
\begin{itemize}[leftmargin=*]\setlength\itemsep{0em}
\item We introduce and implement general graph convolution operators, based on graph gradient and divergence. This abstraction of the spatial operation on the graph leads to a more general and flexible approach for architecture design. \item We propose treating a variety of graph related problems as discretized PDEs, and formulate the dynamics that match different problems such as node-classification and dense shape-correspondence. This is in direct effort to propose a family of networks that can solve multiple problems, instead of GCNs which are tailored for a specific application.
\item Our method allows constructing a deep GCN without over-smoothing, with theoretical guarantees. 
\item We validate our method by conducting numerous experiments on multiple datasets and applications, achieving significantly better or similar accuracy compared to state-of-the-art models.
\end{itemize}

\section{Related work}
\label{sec:related}

\paragraph{Graph Convolutional Networks:}
\label{sec:gcn}
GCNs are typically divided into spectral \cite{bruna2013spectral,kipf2016semi,defferrard2016convolutional} and spatial \cite{simonovsky2017dynamic,masci2015geodesic,gilmer2017neural,monti2017geometric} categories. Most of those can be implemented using the Message-Passing Neural Network paradigm \cite{gilmer2017neural}, where each node aggregates features (messages) from its neighbors, according to some scheme. The works \cite{kipf2016semi, defferrard2016convolutional} use polynomials of the graph Laplacian to parameterize the convolution operator. DGCNN  \cite{wang2018dynamic} constructs a k-nearest-neighbors graph from point-clouds and dynamically updates it. MoNet \cite{monti2017geometric} learns a Gaussian mixture model to weight the edges of meshes for shape analysis tasks. Works like GraphSAGE \cite{hamilton2017inductive} and GAT \cite{velickovic2018graph} propose methods for inductive and transductive learning on non-geometrical graphs. 

Several of the methods above suffer from over-smoothing \cite{Zhao2020PairNorm:, chen20simple}, leading to undesired node features similarity for deep networks. To overcome this problem, some approaches rely on imposing regularization and augmentation. For example, PairNorm \cite{Zhao2020PairNorm:} introduces a novel normalization layer, and DropEdge \cite{Rong2020DropEdge:}  randomly removes edges to decrease the degree of nodes. Other methods prevent over-smoothing by dedicated construction. For instance, JKNet \cite{jknet} combines all intermediate representations at each stage of the network. APPNP \cite{klicpera2018combining} replaces learnt convolutional layers with a pre-defined kernel based on PageRank, yielding a shallow network which preserves locality, making it robust to over-smoothing.
GCNII \cite{chen20simple} proposes to add an identity residual where the initial features of the networks are added to the features of the $l$-th layer, scaled by some coefficient. 

Another approach is to construct a network that inherently does not over-smooth, as we suggest in this work. Our network is based on discretized PDEs, hence we are able to motivate our choices by well studied theory and numerical experiments \cite{ascher2008numerical}. On a similar note, the recent DiffGCN \cite{eliasof2020diffgcn} also makes use of discretized operators to approximate the graph gradient and Laplacian. However, DiffGCN is specifically tailored for geometric tasks since it projects its components on the $x,y,z$ axes, and is applied using a ResNet \cite{he2016deep} (diffusion) structure only. The recent GRAND \cite{chamberlain2021grand} applies attention mechanism with diffusive dynamics, using several integration schemes. Here we propose a network for both geometric and non-geometric tasks, and also utilize both the diffusion or hyperbolic layer dynamics, including a mixture of the two.

\paragraph{PDEs and CNNs:}
In a recent series of works, the connection between PDEs and CNNs was studied \cite{haber2017stable,Chang2017Reversible,ruthotto2019deep,E2017,chaudhari2018deep,lu2018beyond,chen2018neural,haber2019imexnet}. It was shown that it is possible to treat a deep neural network as a dynamical system driven by some PDE, where each convolution layer is considered a time step of a discretized  PDE. The connection between PDEs and CNNs was also used to reduce the computational burden \cite{ephrath2020leanconvnets}. Besides the interpretation of CNNs as a PDE solver, it was shown that it is also possible to learn a symbolic representation of PDEs in a task-and data driven approach in \cite{Bar-Sinai15344,long2019pde}. In the context of GCNs and PDEs, it was recently shown \cite{li2020multipole, belbute_peres_cfdgcn_2020} that GCNs can be utilized to enhance the solution of PDEs for problems like fluid flow dynamics. However, in this work, we harness PDE concepts to design and construct GCNs for a variety of applications.

\section{Methods}
\label{sec:method}

\subsection{Partial Differential Equations on manifolds}
\label{subsec:pde_continuum}

\begin{figure*}
    \centering
    \includegraphics[  width=1.0\textwidth]{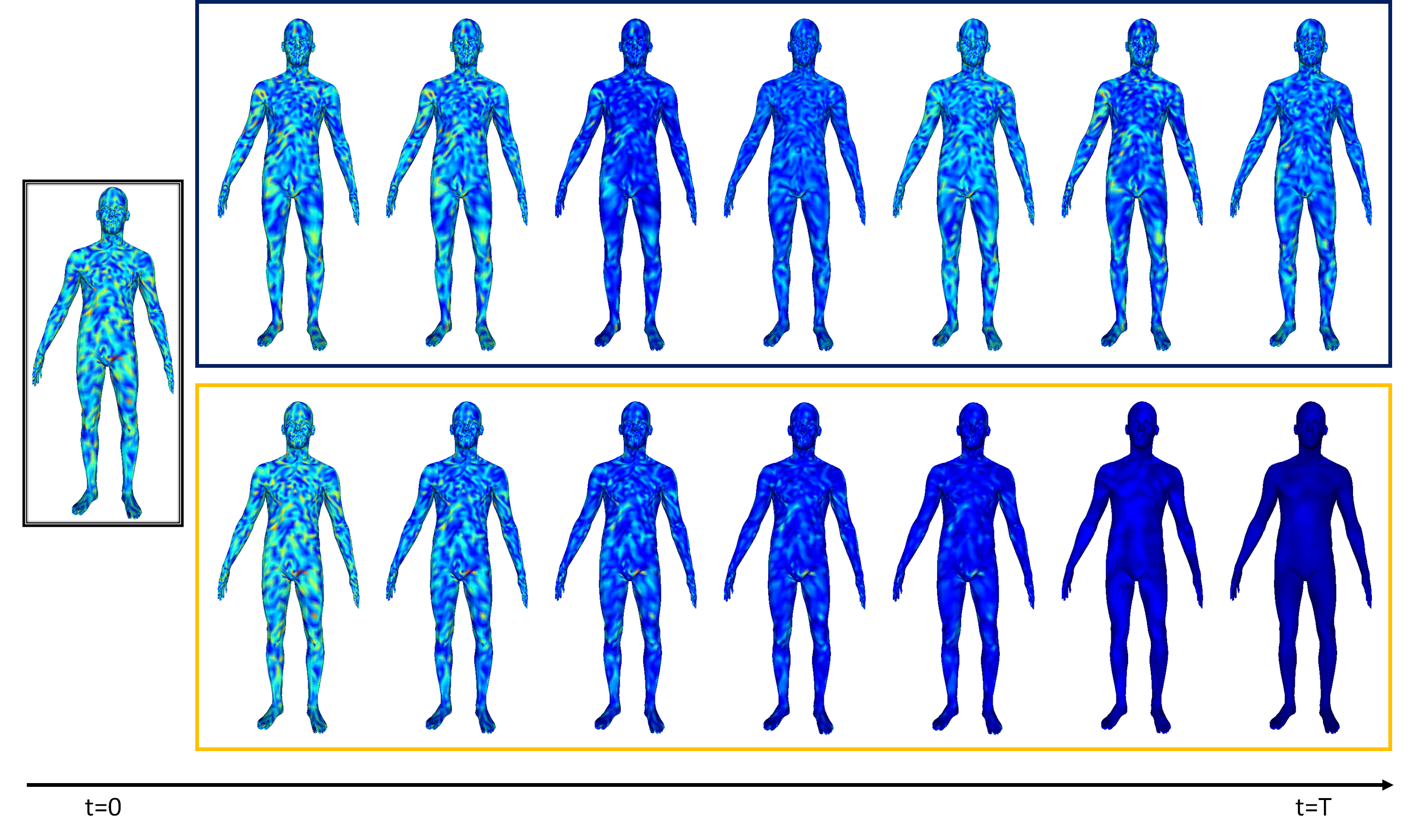}
 
    \caption{Feature evolvement on an input mesh (left). Propagation in time is from left to right. Hyperbolic and diffusion equation dynamics are on the top and bottom row, respectively. While a diffusive graph network (similar to most common GCNs that rely on ResNets) smooths the information on the manifold, the hyperbolic network yields a non-uniform field.
    }
    \label{fig:netDynamics}
\end{figure*}
We show now that GCNs can be viewed as discretizations of PDEs on manifolds, similarly to CNNs that are viewed as discretized PDEs on regular grids \cite{ruthotto2019deep}.
Consider a general manifold ${\cal M}$ where a vector function  $f$ resides (also dubbed the features function), along with its continuous differential operators such as the gradient $\nabla$, divergence $\nabla \cdot$ and the Laplacian $\Delta$ that reside on the manifold ${\cal M}$.

Given these differential operators, one can model different processes on ${\cal M}$. In particular, we consider two  PDEs -- the non-linear diffusion and the non-linear hyperbolic equations

\begin{align}
\label{eq:heat}
f_t &= \nabla \cdot  K^*\sigma(K \nabla f), \quad f(t=0) = f^0 , \quad  t\in [0,T] , \\
\label{eq:wave}
f_{tt} &=  \nabla \cdot K^*\sigma(K \nabla f), \quad f(t=0) = f^0, \quad   f_t(t=0) = 0  , \quad t\in [0,T] ,
\end{align}
respectively, equipped with appropriate boundary conditions.
Here $K$ is a coefficient matrix that can change in time and represents the propagation over the manifold ${\cal M}$, $K^*$ is its conjugate transpose and $\sigma(\cdot)$ is a non-linear activation function. 
Eq. \eqref{eq:heat}-\eqref{eq:wave} define a non-linear operator that takes initial feature vectors $f^{0}$ at time $0$ and propagates them to time $T$, yielding $f^{T}$ where they can be used for different tasks.
We now provide two theorems that characterize the behavior of Eq. \eqref{eq:heat}-\eqref{eq:wave}, based on ideas from \cite{ruthotto2019deep}\footnote{See proofs in Appendix A.}.
\begin{theorem}
 If the activation function $\sigma(\cdot)$ is monotonically non-decreasing and sign-preserving, then the forward propagation through the diffusive PDE in \eqref{eq:heat} for $t \in [0,\infty)$ yields a 
 non-increasing feature norm, that is,
\begin{equation*}
    {\frac {\partial}{\partial t}} \|f\|^2 \le 0 .
\end{equation*} 
\end{theorem}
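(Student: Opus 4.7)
The plan is a standard energy estimate: differentiate $\|f\|^2=\int_{\mathcal M}f^\top f\,d\mu$ in time, substitute the PDE, use integration by parts (the adjoint relation between $\nabla$ and $-\nabla\cdot$) to move one derivative off of $f$, and then exploit the sign-preserving property of $\sigma$ to conclude the resulting integrand is pointwise non-negative.

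Concretely, first I would write
\begin{equation*}
\frac{d}{dt}\|f\|^2 \;=\; 2\int_{\mathcal M} f^\top f_t\,d\mu \;=\; 2\int_{\mathcal M} f^\top \bigl(\nabla\cdot K^*\sigma(K\nabla f)\bigr)\,d\mu ,
\end{equation*}
using Eq.~\eqref{eq:heat}. Next I would integrate by parts on $\mathcal M$; assuming the boundary conditions are chosen so that the boundary term vanishes (e.g.\ homogeneous Neumann / no-flux, which is the natural analogue of the bulk estimate in \cite{ruthotto2019deep}), this gives
\begin{equation*}
\frac{d}{dt}\|f\|^2 \;=\; -2\int_{\mathcal M} (\nabla f)^\top K^*\sigma(K\nabla f)\,d\mu .
\end{equation*}
Using that $K^*$ is the conjugate transpose of $K$, i.e.\ the defining adjoint relation $\langle u, K^* v\rangle = \langle K u, v\rangle$, I would rewrite this as
\begin{equation*}
\frac{d}{dt}\|f\|^2 \;=\; -2\int_{\mathcal M} (K\nabla f)^\top \sigma(K\nabla f)\,d\mu .
\end{equation*}

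The final step is the sign analysis. Since $\sigma$ is sign-preserving, for every scalar component $y$ of $K\nabla f$ we have $y$ and $\sigma(y)$ with the same sign (and $\sigma(0)=0$ by sign-preservation), so $y\,\sigma(y)\ge 0$ pointwise; monotonicity is not strictly required here but ensures consistency with the usual activations (ReLU, tanh, etc.) satisfying both hypotheses. Summing over components yields $(K\nabla f)^\top \sigma(K\nabla f)\ge 0$ pointwise on $\mathcal M$, and therefore $\frac{d}{dt}\|f\|^2 \le 0$ as claimed.

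The routine calculation is the integration by parts; the only real subtlety, and the one worth flagging, is the boundary-condition assumption needed to discard the boundary term — once that is granted (as it is implicit in ``equipped with appropriate boundary conditions'' in the statement of \eqref{eq:heat}), everything else is algebraic. The ``sign-preserving'' hypothesis is precisely the minimal condition needed to make the quadratic-like form $y\sigma(y)$ non-negative, so no further structural assumption on $\sigma$ (smoothness, boundedness) is needed for the inequality.
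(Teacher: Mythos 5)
Your proof is correct and follows essentially the same route as the paper's: differentiate $\|f\|^2$, substitute the PDE, integrate by parts, move $K^*$ across the inner product, and use sign-preservation to get $(K\nabla f,\sigma(K\nabla f))\ge 0$. Your explicit flagging of the boundary term matches the paper, which states in the appendix that the theorems hold under Neumann boundary conditions enforced by the construction of the operators.
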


\begin{theorem}
 Assume that the activation function $ \sigma(\cdot)$ is monotonically non-decreasing, sign-preserving and satisfies   $|\sigma(x)| \le |x|$, and define energy
 $$ {\cal E}_{net} = \|f_t\|^2 + \left(K\nabla f,\sigma(K \nabla f) \right),$$
 then the forward propagation through the hyperbolic PDE in \eqref{eq:wave} satisfies ${\cal E}_{net} \le c_K$, where $c_K$ is a constant that depends on $K$ but independent of time.

\end{theorem}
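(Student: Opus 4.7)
My plan is the standard energy estimate for nonlinear wave equations: test \eqref{eq:wave} against $f_t$, integrate by parts to identify a conserved Hamiltonian-like quantity, and then bound $\mathcal{E}_{net}$ in terms of it using the hypotheses on $\sigma$. Differentiating $\|f_t\|^2$ in time and substituting from \eqref{eq:wave} gives $\tfrac{1}{2}\tfrac{d}{dt}\|f_t\|^2 = (f_t,\nabla\cdot K^*\sigma(K\nabla f)) = -(K\nabla f_t,\sigma(K\nabla f))$, the last equality by integration by parts under the assumed boundary conditions. Introducing the componentwise antiderivative $\Sigma(y):=\int_0^y \sigma(s)\,ds$, the chain rule yields $\tfrac{d}{dt}\Sigma(K\nabla f) = \sigma(K\nabla f)\cdot K\nabla f_t$, so the Hamiltonian $\widetilde{\mathcal{E}}(t) := \|f_t\|^2 + 2\int \Sigma(K\nabla f)$ is exactly conserved. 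Combined with the initial condition $f_t(0)=0$, this gives $\widetilde{\mathcal{E}}(t) = 2\int \Sigma(K\nabla f^0)$ for all $t$, which is finite, $K$-dependent, and $t$-independent.

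To pass from $\widetilde{\mathcal{E}}$ to the stated $\mathcal{E}_{net}$, I would use the three structural hypotheses on $\sigma$. Sign-preservation gives $y\sigma(y)\ge 0$ and $\Sigma(y)\ge 0$; $|\sigma(x)|\le|x|$ gives $y\sigma(y)\le y^2$ and $\Sigma(y)\le y^2/2$; monotonicity with $\sigma(0)=0$ gives $\Sigma(y)\le y\sigma(y)$. Combined, these control the potential part $(K\nabla f,\sigma(K\nabla f))$ of $\mathcal{E}_{net}$ by a constant multiple of the conserved potential part of $\widetilde{\mathcal{E}}(0)$ and of $\|K\nabla f^0\|^2$, yielding $\mathcal{E}_{net}(t)\le c_K$ with $c_K$ a $K$-dependent, $t$-independent constant determined by the initial Dirichlet-type energy $\|K\nabla f^0\|^2$.

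The main technical hurdle is precisely this last comparison: the ``natural'' conserved energy for \eqref{eq:wave} is $\widetilde{\mathcal{E}}$ built from the antiderivative $\Sigma$, whereas $\mathcal{E}_{net}$ is phrased using the product $y\sigma(y)$. Monotonicity alone gives only $\Sigma(y)\le y\sigma(y)$, which is the wrong direction for an upper bound, so the contractivity $|\sigma(x)|\le|x|$ is essential to close the estimate. A secondary point is that the theorem allows $K=K(t)$; in that case both the integration by parts and the chain rule on $\Sigma(K\nabla f)$ pick up additional $K_t$-contributions, which must be absorbed into $c_K$ via a suitable bound on $K_t$ (naturally available from the finite-layer discretization used in the network).
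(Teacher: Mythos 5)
Your route is genuinely different from the paper's, and it is worth being clear about where the two diverge. The paper never introduces the antiderivative $\Sigma$: it defines the \emph{linear} energy $\mathcal{E}_{lin}=\|f_t\|^2+\left(K\nabla f,K\nabla f\right)$, argues (for $K$ constant in time) that $\tfrac12\partial_t\mathcal{E}_{lin}=(f_t,\,f_{tt}-\nabla\cdot K^*K\nabla f)=0$ so that $\mathcal{E}_{lin}\equiv c_K$, and then concludes $\mathcal{E}_{net}\le\mathcal{E}_{lin}=c_K$ from the pointwise inequality $y\,\sigma(y)\le y^2$, which follows from sign preservation together with $|\sigma(x)|\le|x|$. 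Your opening step --- exact conservation of $\widetilde{\mathcal{E}}=\|f_t\|^2+2\int\Sigma(K\nabla f)$ along the \emph{nonlinear} flow, hence $\widetilde{\mathcal{E}}(t)=2\int\Sigma(K\nabla f^0)$ --- is correct and is in fact the honest Hamiltonian of \eqref{eq:wave}, which the paper does not use.

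The problem is exactly at the point you flag as ``the main technical hurdle,'' and your proposed resolution does not close it. The conserved quantity controls $\|f_t\|^2$ and $\int\Sigma(K\nabla f(t))$, but the potential term of $\mathcal{E}_{net}$ is $\int y\,\sigma(y)$ with $y=K\nabla f(t)$ evaluated \emph{at time $t$}. Monotonicity gives $\Sigma(y)\le y\,\sigma(y)$, which as you note is the wrong direction; $|\sigma(x)|\le|x|$ gives $y\,\sigma(y)\le y^2$, which bounds the potential term by $\|K\nabla f(t)\|^2$ at time $t$ --- a quantity your conservation law does not control, since $\Sigma(y)$ can be far smaller than $y^2$ (for $\sigma=\tanh$ one has $\Sigma(y)=\log\cosh y$, which grows linearly, so boundedness of $\int\Sigma(K\nabla f)$ does not imply boundedness of $\|K\nabla f\|^2$). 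Nor is there a universal constant $C$ with $y\,\sigma(y)\le C\,\Sigma(y)$ under your hypotheses: for $\sigma(s)=\mathrm{sign}(s)\max(0,|s|-1)$ the ratio $y\,\sigma(y)/\Sigma(y)=2y/(y-1)$ blows up as $y\downarrow 1$. So the sentence claiming the three structural hypotheses ``control the potential part \ldots by a constant multiple of the conserved potential part'' is not justified, and the estimate does not close from $\widetilde{\mathcal{E}}$ alone. The paper's argument avoids this by comparing $\mathcal{E}_{net}(t)\le\mathcal{E}_{lin}(t)$ pointwise in time and taking $\mathcal{E}_{lin}$ as the conserved quantity; if you wish to keep your (more principled) Hamiltonian, you would need an additional argument relating $\|K\nabla f(t)\|^2$ back to the initial data, which the stated hypotheses on $\sigma$ do not supply.
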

The outcome of those theorems is that the dynamics described in Eq. \eqref{eq:heat} is smoothing, while the one in Eq. \eqref{eq:wave} is bounded by a conserving mapping. An illustration of this behavior is presented in Fig. \ref{fig:netDynamics}.

In the physical world, diffusion and hyperbolic equations are used for different applications. Similarly, many computational models for image segmentation \cite{caselles1997geodesic}, denoising \cite{rudin1992nonlinear} and deblurring are based on anisotropic diffusion which are similar to the model in Eq. \eqref{eq:heat}.
On the other hand, applications that require conservation such as volume/distance preservation as in the dense shape correspondence task \cite{aubry2011wave} and protein folding \cite{eliasof2021mimetic}, are typically better treated using a hyperbolic equation as in Eq. \eqref{eq:wave}. Those insights motivate us to construct two types of layers according to Eq. \eqref{eq:heat}-\eqref{eq:wave} using discretized differential operators on graphs.

\subsection{Discretized differential operators on graphs}
\label{subsec:discretizedoperators}

The models \eqref{eq:heat} and \eqref{eq:wave}
reside in a continuous manifold ${\cal M}$, on which a continuous function vector $f$ is defined. A graph can be thought of as a discretization of that manifold to a finite space. 
Assume we are given an undirected graph $\cal G=({\cal V},{\cal E})$  where $\cal V \in \cal M$ is the set of $n$ vertices of the graph and $\cal E$ is the set of $m$ edges of the graph. Let us denote by ${\bf f}_i\in\mathbb{R}^c$ the value of the discrete version of $f$, on the $i$-th node of $\cal G$. $c$ is the number of channels, which is the width of the neural network. We define $\bf G$, the discrete gradient operator on the graph, also known as the incidence matrix, as follows:
\begin{equation}
    \label{eq:graphGradient}
    ({\bf G} {\bf f})_{ij} = {\bf W}_{ij} ({\bf f}_i - {\bf f}_j) ,
\end{equation}
where nodes $i$ and $j$ are connected via the $(i,j)$-th edge,  ${\bf W}_{ij}$ is an edge weight matrix which can be learnt, and ${\bf f}_i$ and ${\bf f}_j$ are the features on the $i$-th and $j$-th nodes, respectively.
The gradient operator can be thought of as a weighted directional derivative of the function $f$ in the direction defined by the nodes $i$ and $j$. Furthermore, if we choose the scaled identity matrix ${\bf W}_{ij} = d_{ij}^{-1}{\bf I}$, 
where $d_{ij}$ is the distance between the two nodes, then the discrete gradient is a second order approximation to the true gradient of the function ${f}$ on the edges of the graph. In this work, we use ${\bf W}_{ij} = \gamma_{ij}{\bf I}$, where the scale $\gamma_{ij}$ is the geometric mean of the degree of the nodes $i,j$. Note, that the gradient operator is a mapping from the \textit{vertex} space to the \textit{edge} space.

Given the gradient matrix, it is possible to define the divergence matrix \cite{hyman2002mimetic}, which is an approach that is used extensively in mimetic discretizations of PDEs. To this end, we define the inner product between an edge feature vector ${\bf q}$ and the gradient of a node feature vector ${\bf f}$ as
\begin{eqnarray}\label{eq:grad_inner_product}
\label{inner}
({\bf q}, {\bf G}{\bf f}) = {\bf q}^{\top}{\bf G}{\bf f} = {\bf f}^{\top} {\bf G}^{\top} {\bf q} .
\end{eqnarray}
The divergence is naturally defined as the operator that maps edge operator $\bf q$ to the node space, that is
$\nabla \cdot \approx -{\bf G}^{\top}$. As usual, the graph Laplacian operator can be obtained by taking the divergence of the gradient. In graph theory it is defined as  a positive matrix that is, $\Delta \approx {\bf G}^{\top}{\bf G}$ \footnote{ In the field of numerical PDEs, the Laplacian is defined as $-{\bf G}^{\top}{\bf G}$, but the combinatorial Laplacian is defined as ${\bf G}^{\top}{\bf G}$. }. 

We also define the weighted line integral over an edge. Similarly to Eq. \eqref{eq:graphGradient}-\eqref{eq:grad_inner_product}, we define
\begin{equation}
    \label{eq:graphAve}
    ({\bf A} {\bf f})_{ij} = {\frac 12} {\bf W}_{ij} ({\bf f}_i + {\bf f}_j), \quad ({\bf q}, {\bf A}{\bf f}) = {\bf q}^{\top}{\bf A}{\bf f} = {\bf f}^{\top} {\bf A}^{\top} {\bf q}.
\end{equation}
The operator $\bf A$ approximates the mass operator on the edges. The right equation in Eq. \eqref{eq:graphAve} suggests that an appropriate averaging operator for edge features is the transpose of the nodal edge average. 

The advantage of defining such operators is that we are able to design networks with architectures that mimic the continuous operators \eqref{eq:heat} and \eqref{eq:wave} on the discrete level, as we show in the next section.

\subsection{PDE-GCN: Graph Convolutional Networks by Partial Differential Equations}
\label{subsec:pdegcn}

In order to use the computational models in Eq. \eqref{eq:heat}-\eqref{eq:wave}, we form their discrete versions:

\begin{align}
\label{eq:heatdisc}
{\bf f}^{(l+1)} &= {\bf f}^{(l)} - h{\bf G}^{\top} {\bf K}_l^{\top}\sigma({\bf K}_l {\bf G} {\bf f}^{(l)}) , \\
\label{eq:wavedisc}
{\bf f}^{(l+1)} &= 2{\bf f}^{(l)} - {\bf f}^{(l-1)} - h^2{\bf G}^{\top} {\bf K}_l^{\top}\sigma({\bf K}_l {\bf G} {\bf f}^{(l)}) .
\end{align}
Here, in Eq. \eqref{eq:heatdisc} we use the forward Euler to discretize Eq. \eqref{eq:heat}, and in Eq. \eqref{eq:wavedisc} we discretize the second order time derivative in Eq. \eqref{eq:wave}, using the leapfrog method. In both cases, ${\bf f}^{(l)}$ are the node features and ${\bf K}_{l}$ is a $1 
\times 1$ trainable convolution of the $l$-th layer. The similarity between \eqref{eq:heatdisc} and ResNet is well documented in the context of CNNs \cite{ruthotto2019deep}. The hyper-parameter $h$ is the step-size, and it is chosen such that the stability of the discretization is kept. We use $\sigma=\tanh$ for the activation function as it yields slightly better results in our experiments, although other functions such as ReLU can also be used, as reported in  Sec. \ref{sec:ablation}. Each of Eq. \eqref{eq:heatdisc}-\eqref{eq:wavedisc} defines a PDE-GCN block. We denote the former (diffusive equation) by PDE-GCN\textsubscript{D} and the latter (hyperbolic equation) by PDE-GCN\textsubscript{H}.

To complete the description of our network, a few more details are required, as follows:

\textbf{The (opening) embedding layer.} The input vertex features ${\bfu}_{\cal V}$ are fed through an embedding ($1\times 1$ convolution) layer ${\bf K}_{o}$ to obtain the initial features ${\bf f}_{0}$ of our PDE-GCN network:
${\bf f}_0 = {\bf K}_o {\bf u}_{\cal V}$. 
In cases where input edge attributes (features) $
\bfu_{\cal E}$ are available (as in the experiment in Sec. \ref{sec:densecorr}), we transform them to the vertex space by taking their divergence and average, and concatenate them to the input of the embedding layer ${\bf K}_{o}$ as follows:
$ {\bf f}_0 = {\bf K}_o ({\bf u}_{ \cal V}  \oplus {\bf A}^{\top} {\bf u}_{ \cal E} \oplus {\bf G}^{\top}{\bf u}_{ \cal E})$. 

\textbf{The (closing) embedding layer.} Given the final vertex features ${\bf f}^{(L)}$ of our PDE-GCN with $L$ layers, we obtain the output of the network by performing:
${\bf u}_{out} = {\bf K}_c {\bf f}^{(L)}$.  
Here ${\bf K}_c$ is a $1\times 1$ convolution layer mapping the hidden feature space to the output shape.

\textbf{Initialization.} The $1 \times 1$ convolutions ${\bf K}_{l}$ in Eq. \eqref{eq:heatdisc}-\eqref{eq:wavedisc} are initialized as identity. 
This way, the network begins from a diffusion/hyperbolic equation, which serves as a prior and guides the network to initially behave like classical methods \cite{caselles1997geodesic, aubry2011wave} and to further improve during training.

\textbf{The choice of dynamics.} For some applications, anisotropic diffusion is appropriate, while for others conservation is more important. However, in some applications this may not be clear. To this end, it is possible to combine Eq. \eqref{eq:heat}-\eqref{eq:wave} to obtain the continuous process
\begin{eqnarray}
\label{diffwave}
\alpha f_{tt} + (1-\alpha)f_t = \nabla \cdot K^*\sigma(K \nabla f),  \quad f(t=0) = f^0, \quad f_t(t=0) = 0 \quad t\in [0,T],
\end{eqnarray}
where $\alpha = sigmoid(\beta)$, meaning $0 \le \alpha \le 1$, and $\beta$ is a single trainable parameter. 
The discretization of this PDE leads to the following network dynamics:
\begin{eqnarray}
\label{diffwaved}
\alpha ({\bf f}^{(l+1)} - 2 {\bf f}^{(l)} + {\bf f}^{(l-1)}) + h(1-\alpha)({\bf f}^{(l+1)} - {\bf f}^{(l)}) = -h^2{\bf G}^{\top} {\bf K}_l^{\top}\sigma({{\bf K}_{l}} {\bf G} {\bf f}^{(l)}),
\end{eqnarray}
where ${\bf f}^{(l+1)}$ is updated by the known ${\bf f}^{(l)}$
and ${\bf f}^{(l-1)}$. We denote a layer that is governed by Eq. \eqref{diffwaved} by PDE-GCN\textsubscript{M} (where M stands for mixture). Note, that it is also possible to learn a combination coefficient $\alpha_i$ per layer, although we did not read a benefit from such scheme.

As we show in our numerical experiments, learning $\alpha$ yields results that are consistent with our understanding of the problem, that is, graph node-classification gravitates towards no second order derivatives while applications that require conservation gravitate towards the hyperbolic equation.

\section{Experiments}
\label{sec:experiments}

In this section we demonstrate our approach on various problems from different fields and applications ranging from 3D shape-classification \cite{wu20153d} to protein-protein interaction \cite{hamilton2017inductive} and node-classification \cite{sen2008collective}. 
The experiments also vary in their output type. Node classification is similar to segmentation problems that are typically solved by anisotropic diffusion while the dense shape correspondence problem is conservative and therefore can be thought of as a problem that does not require smoothing.

In all experiments, we use the suitable PDE-GCN (D, H or M) block as described in Sec. \ref{sec:method}, with various depths (number of layers) and widths (number of channels), as well as the appropriate final convolution steps, depending on the task at hand. A detailed description of the architectures used in our experiments is given in Appendix B.
We use the Adam \cite{kingma2014adam} optimizer in all experiments, and perform grid search over the hyper-parameters of our network. The selected hyper-parameters are reported in Appendix C. Our objective function in all experiments is the cross-entropy loss, besides inductive learning on PPI where we use the binary cross-entropy loss. Our code is implemented using PyTorch \cite{pytorch}, trained on an Nvidia Titan RTX GPU.

We show that for all the considered tasks and datasets, our method is either remarkably better or on par with state-of-the-art models.

\subsection{Generalization of GCNs to different applications}
\label{sec:generalizationExperiment}
To gain deeper understanding about the effectiveness and generalization of various GCN methods to different tasks, we start by picking two datasets - ModelNet-10 \cite{wu20153d} for 3D shape-classification, and Cora \cite{sen2008collective} for semi-supervised node classification. Those datasets are not only different in terms of application (global versus local classification), but also stem from different domains. While in ModelNet-10 the data has geometrical meaning, the data in Cora has no obvious geometrical interpretation.
Therefore, we suggest that success in both applications should be obtained from a generalizable GCN.
We compare our PDE-GCN\textsubscript{D} with two recent and popular networks - DGCNN \cite{wang2018dynamic} and GCNII \cite{chen20simple}. For ModelNet-10 shape-classification, we randomly sample 1,024 points from each shape to form a point cloud, and connect its points using k-nearest-neighbors (k-nn) algorithm with $k=10$ to obtain a graph and follow the training scheme of \cite{wang2018dynamic}. On Cora, we follow the same procedure as in \cite{chen20simple}. We evaluate all models with 4 layers, as well as 2 layers for DGCNN on Cora.

Our results, reported in Tab. \ref{table:generality} suggest that while each of the considered methods obtains high accuracy on the dataset it originally was tested on (Cora for GCNII and ModelNet-10 for DGCNN), obtaining a similar measure of success on a different dataset was not possible when using the very same networks. Additionally, while our attempts to add a diffusion-equation dynamics to DGCNN (i.e., updating features as in Eq. \eqref{eq:heatdisc}) showed an increase in performance -- a large gap to state-of-the-art model still exists. On top of that, we also see that DGCNN suffers from over-smoothing, as its accuracy significantly decreases when adding more layers. Last but not least, we observe that our PDE-GCN\textsubscript{D} obtains high accuracy on both datasets, similar or better than state-of-the-art models.

\begin{table}
  \caption{Generalization of GCNs to different domains and applications. $(L)$ denotes $L$ layers.}
  \label{table:generality}
  \begin{center}
  \begin{tabular}{lccc}
  \toprule
    Method & Dataset & Accuracy (\%)  \\
    \midrule
    DGCNN (4) \cite{wang2018dynamic}  & ModelNet-10 & 92.8   \\
    DGCNN (2) / (4) & Cora & 34.9 / 25.2 \\
    DGCNN + Diffusion (2) / (4)  & Cora & 71.0 / 66.1 \\
    
    \midrule
    GCNII (4) \cite{chen20simple} & ModelNet-10 & 65.4 \\
    & Cora & 82.6 \\
    \midrule
    PDE-GCN\textsubscript{D}(4) (Ours)  & ModelNet-10 & 92.2 \\
    & Cora & 83.6 \\
    \bottomrule
  \end{tabular}
\end{center}
\end{table}

\subsection{Learning PDE network dynamics}
\label{sec:learningPDE}
\begin{figure}
\centering
\begin{subfigure}{.5\textwidth}
  \centering
  \includegraphics[width=1.0\linewidth]{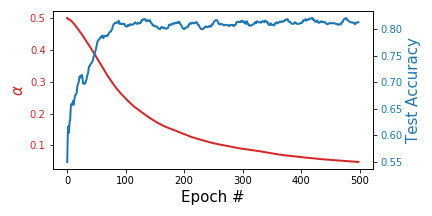}
  \caption{Graph node-classification (Cora)}
  \label{fig:learnPDEheat}
\end{subfigure}%
\begin{subfigure}{.5\textwidth}
  \centering
  \includegraphics[width=1.0\linewidth]{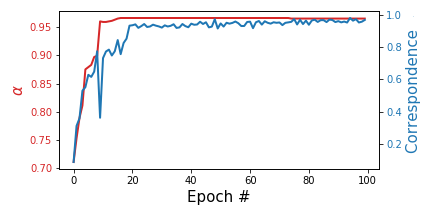}
  \caption{Dense shape correspondence (FAUST)}
  \label{fig:learnPDEwave}
\end{subfigure}
\caption{Learnt mixture of the hyperbolic equation $\alpha $, to the dynamics  of the network. The diffusion equation contribution is $1-\alpha$.}
\label{fig:learningPDE}
\end{figure}

In this experiment, we delve on the ability to \textit{learn} the appropriate PDE that better models a given problem. To this end, we use the mixture model from Eq. \eqref{diffwaved} so that the resulting PDE is a combination of
the diffusion and hyperbolic dynamics. We use a 8 layer mixed PDE-GCN, starting with $\alpha=0.5$, such that it is balanced between a PDE-GCN\textsubscript{D} and PDE-GCN\textsubscript{H}. By learning the parameter $\alpha$ in \eqref{diffwaved}, we allow to choose a mixed PDE between a purely conservative network and a diffusive one. We consider two problems: semi-supervised node classification on Cora, and dense shape correspondence on FAUST \cite{Bogo:CVPR:2014}.

Our results, reported in Fig. \ref{fig:learningPDE} suggest that just as in classical works \cite{caselles1997geodesic,whitaker1998level}, problems like node-classification obtain better performance with an anisotropic diffusion like in Eq. \eqref{eq:heatdisc}, and for problems involving dense-correspondences like in \cite{aubry2011wave,eliasof2021mimetic} that tend to conserve the energy of the underlying problem, a hyperbolic equation type of PDE as in Eq. \eqref{eq:wavedisc} is more appropriate.

\begin{table}[h]
  \caption{Statistics of datasets used in our semi-and fully supervised node-classification experiments.}
  \label{table:datasets}
  \begin{center}
  \resizebox{\columnwidth}{!}{
  \begin{tabular}{lccccccccc}
  \toprule
    Dataset & Cora & CiteSeer & PubMed & Chameleon & Cornell & Texas& Wisconsin & PPI  \\
    \midrule
    Classes & 7 & 6 & 3 & 5 & 5 & 5 & 5 & 121  \\
    Nodes & 2,708 & 3,327 & 19,717 & 2,277 & 183 & 183 & 251 & 56,944 \\
    Edges & 5,429 & 4,732 & 44,338 & 36,101 & 295 & 309 & 499 & 818,716 \\
    Features & 1,433  & 3,703 & 500 & 2,325 & 1,703 & 1,703 & 1,703 & 50 \\
    \bottomrule
  \end{tabular}}
\end{center}
\end{table}

\subsection{Semi-supervised node classification}
\label{sec:semisupervised}
In this set of experiments we use three datasets -- Cora, Citeseer and PubMed \cite{sen2008collective}. For all datasets we use the standard training/validation/testing split as in \cite{yang2016revisiting}, with 20 nodes per class for training, 500 validation nodes and 1,000 testing nodes and follow the training scheme of \cite{chen20simple}. The statistics of the datasets are reported in Tab. \ref{table:datasets}. We compare our results using PDE-GCN\textsubscript{D} with recent and popular models like GCN \cite{kipf2016semi}, GAT \cite{velickovic2018graph}, APPNP \cite{klicpera2018combining}, JKNet \cite{jknet} and DropEdge \cite{Rong2020DropEdge:}.
We note that our network does not over-smooth, as an increase in the number of layers does not cause performance degradation. For example, on CiteSeer, we obtain $75.6\%$ accuracy with 32 layers, compared to $74.6\%$ with two layers\footnote{Note that this result is also a new state-of-the-art accuracy.}. Overall, our results in Tab. \ref{table:semisupervised} show that our PDE-GCN\textsubscript{D} achieves similar or better accuracy than the considered methods. 

\begin{table}[h]
  \caption{Semi-supervised node classification accuracy ($ \%$). -- indicates not available results.}
  \label{table:semisupervised}
  \begin{center}
  \begin{tabular}{lccccccc}
    \toprule
    &\multicolumn{7}{c}{Layers}\\
    \cline{3-8}
    Dataset & Method & 2 & 4 & 8 & 16 & 32 & 64 \\
    \midrule
    Cora & GCN \cite{kipf2016semi} & \bf{81.1} & 80.4 & 69.5 & 64.9 & 60.3 & 28.7 \\
    & GCN (Drop) \cite{Rong2020DropEdge:} & \bf{82.8} & 82.0 & 75.8 & 75.7 & 62.5 & 49.5 \\
    & JKNet \cite{jknet} & -- & 80.2 & 80.7 & 80.2 & \bf{81.1} & 71.5 \\
    & JKNet (Drop) \cite{Rong2020DropEdge:} & -- & \bf{83.3} & 82.6 & 83.0 & 82.5 & 83.2 \\
    & Incep \cite{Rong2020DropEdge:} & -- & 77.6 & 76.5 & 81.7 & \bf{81.7} & 80.0  \\
    & Incep (Drop) \cite{Rong2020DropEdge:} & -- & 82.9 & 82.5 & 83.1 & 83.1 & \bf{83.5} \\
    & GCNII \cite{chen20simple} & 82.2 & 82.6 & 84.2 & 84.6 & 85.4 & \bf{85.5} \\
    & GCNII* \cite{chen20simple} & 80.2 & 82.3 & 82.8 & 83.5 & 84.9 & \bf{85.3} \\
    & PDE-GCN\textsubscript{D} (Ours) & 82.0 & 83.6 & 84.0 & 84.2 & 84.3 & \bf{84.3}  \\
   \midrule
    Citeseer & GCN \cite{kipf2016semi} & \bf{70.8} & 67.6 & 30.2 & 18.3 & 25.0 & 20.0 \\
    & GCN (Drop) \cite{Rong2020DropEdge:} & \bf{72.3} & 70.6 & 61.4 & 57.2 & 41.6 & 34.4 \\
    & JKNet \cite{jknet} & -- & 68.7 & 67.7 & \bf{69.8} & 68.2 & 63.4 \\
    & JKNet (Drop) \cite{Rong2020DropEdge:} & -- & 72.6 & 71.8 & \bf{72.6} & 70.8 & 72.2 \\
    & Incep \cite{Rong2020DropEdge:} & -- & 69.3 & 68.4 & \bf{70.2} & 68.0 & 67.5 \\
    & Incep (Drop) \cite{Rong2020DropEdge:} & -- & \bf{72.7} & 71.4 & 72.5 & 72.6 & 71.0 \\
    & GCNII \cite{chen20simple} & 68.2 & 68.8 & 70.6 & 72.9 & \bf{73.4} & 73.4 \\
    & GCNII* \cite{chen20simple} & 66.1 & 66.7 & 70.6 & 72.0 & \bf{73.2} & 73.1 \\
    & PDE-GCN\textsubscript{D} (Ours) & 74.6 & 75.0 & 75.2 & 75.5 & \textbf{75.6} & 75.5 \\
    \midrule
        Pubmed & GCN \cite{kipf2016semi} & \bf{79.0} & 76.5 & 61.2 & 40.9 & 22.4 & 35.3 \\
    & GCN (Drop) \cite{Rong2020DropEdge:} & \bf{79.6} & 79.4 & 78.1 & 78.5 & 77.0 & 61.5 \\
    & JKNet \cite{jknet} & -- & 78.0 & \bf{78.1} & 72.6 & 72.4 & 74.5 \\
    & JKNet (Drop) \cite{Rong2020DropEdge:} & -- & 78.7 & 78.7 & \bf{79.7} & 79.2 & 78.9 \\
    & Incep \cite{Rong2020DropEdge:} & -- & 77.7 & \bf{77.9} & 74.9 & -- & -- \\
    & Incep (Drop) \cite{Rong2020DropEdge:} & -- & \bf{79.5} & 78.6 & 79.0 & -- & -- \\
    & GCNII \cite{chen20simple} & 78.2 & 78.8 & 79.3 & \bf{80.2} & 79.8 & 79.7 \\
    & GCNII* \cite{chen20simple} & 77.7 & 78.2 & 78.8 & \bf{80.3}& 79.8 & 80.1 \\
    & PDE-GCN\textsubscript{D} (Ours) & 79.3 & \bf{80.6} & 80.1  & 80.4 & 80.2 & 80.3 \\

    \bottomrule
  \end{tabular}
\end{center}
\end{table}

\subsection{Fully-supervised node classification}
\label{sec:fullysupervised}
We follow \cite{Pei2020Geom-GCN:} and use 7 datasets: Cora, CiteSeer, PubMed, Chameleon, Cornell, Texas and Wisconsin. We also use the same train/validation/test splits of $60 \%, 20\%, 20\%$, respectively. In addition, we report the average performance over 10 random splits from \cite{Pei2020Geom-GCN:}. We fix the number of channels to 64 and perform grid search to determine the hyper-parameters parameters, which are reported in Appendix C. We compare our network with GCN, GAT, three variants of Geom-GCN \cite{Pei2020Geom-GCN:}, APPNP, JKNet, Incep and GCNII in Tab. \ref{table:fullysupervised}. Our experiments read either similar or better than the state-of-the-art on Cora, CiteSeer and PubMed datasets. On Chameleon \cite{musae}, Cornell, Texas and Wisconsin datasets, we improve state-of-the-art accuracy by a significant margin. For example, we obtain $93.24\%$ accuracy on Texas with our PDE-GCN\textsubscript{M}, compared to $77.84\%$ with GCNII*. Similar improvements hold for Cornell and Wisconsin datasets. The common factor for these datasets is their small size, as depicted from Tab. \ref{table:datasets}. We argue that the success of our network stems from its capability of apriori extracting features from graphs, due to its utilization of discretized differential operators and PDE guided construction. On Chameleon \cite{musae}, using PDE-GCN\textsubscript{M}, we improve the current state-of-the-art accuracy of GCNII* from $62.48\%$ to $66.01\%$. Also, we note that unlike in the semi-supervised case, where some of the labels are missing, here it is possible to obtain meaningful results with the hyperbolic equation based PDE-GCN\textsubscript{H} as we do not have unknown nodes in the fully-supervised case, which would be otherwise preserved using the hyperbolic equation dynamics.

\begin{table}
  \caption{Fully-supervised node classification accuracy ($ \%$). (L) indicates a $L$ layers network.}
  \label{table:fullysupervised}
   
  \begin{center}
  \resizebox{\columnwidth}{!}{\begin{tabular}{lccccccc}
    \toprule
    Method & Cora & Cite. & Pubm. & Cham. & Corn. & Texas & Wisc. \\
        \midrule

    GCN \cite{kipf2016semi} & 85.77 & 73.68 & 88.13 &  28.18 &  52.70 & 52.16 & 45.88 \\
    GAT \cite{velickovic2018graph} & 86.37 & 74.32 & 87.62 & 42.93 & 54.32 & 58.38 & 49.41 \\
    Geom-GCN-I \cite{Pei2020Geom-GCN:} & 85.19 & 77.99 & 90.05 & 60.31 & 56.76 & 57.58 & 58.24 \\
    Geom-GCN-P \cite{Pei2020Geom-GCN:} & 84.93 & 75.14 & 88.09 & 60.90 & 60.81  & 67.57 & 64.12 \\
    Geom-GCN-S \cite{Pei2020Geom-GCN:} & 85.27 & 74.71 & 84.75 & 59.96 & 55.68  & 59.73 & 56.67 \\
    APPNP \cite{klicpera2018combining} &  87.87 & 76.53 & 89.40 & 54.30 & 73.51  & 65.41 & 69.02 \\
    JKNet \cite{jknet} & 85.25 (16) & 75.85 (8) & 88.94 (64) & 60.07 (32) & 57.30 (4) & 56.49 (32) & 48.82 (8) \\
    JKNet (Drop) \cite{Rong2020DropEdge:} & 87.46 (16) & 75.96 (8) & 89.45 (64) & 62.08 (32) & 61.08 (4) & 57.30 (32) & 50.59 (8) \\
    Incep (Drop) \cite{Rong2020DropEdge:} & 86.86 (8) & 76.83 (8) & 89.18 (4)  &  61.71 (8) & 61.62 (16) & 57.84 (8) & 50.20 (8) \\
    GCNII \cite{chen20simple} & 88.49 (64) & 77.08 (64) & 89.57 (64) & 60.61 (8)  & 74.86 (16) & 69.46 (32) & 74.12 (16) \\
    GCNII* & 88.01 (64) & 77.13 (64) & \textbf{90.30} (64) & 62.48 (8) & 76.49 (16) & 77.84 (32) & 81.57 (16) \\
    \midrule
    PDE-GCN\textsubscript{D} (Ours) & 88.51 (16) & 78.36 (64) & 89.6 (64) & 64.12 (8) & 89.19 (2) & 90.81 (8) & 90.39 (8) \\
    
    PDE-GCN\textsubscript{H} (Ours) & 87.71 (32) & 78.13 (16) & 89.16 (16) & 61.57 (64) & 89.45 (64) & 92.16 (64) & 91.37 (16) \\
    PDE-GCN\textsubscript{M} (Ours) & \textbf{88.60} (16)  & \textbf{78.48} (32)  & 89.93 (16)  & \textbf{66.01} (16) & \textbf{89.73} (64)  & \textbf{93.24} (32)  &  \textbf{91.76} (16) \\
    \bottomrule
  \end{tabular}}
\end{center}
\end{table}

\subsection{Inductive Learning}
\label{sec:inductive}
We follow \cite{chen20simple} and employ the PPI dataset \cite{hamilton2017inductive} for the inductive learning task. We use a 8 layer PDE-GCN\textsubscript{D} network, without dropout or weight-decay, and a learning rate of 0.001. We compare our results with methods like GraphSAGE, GAT, JKNet, GeniePath, GCNII and others. As reported in Tab. \ref{table:ppi}, our PDE-GCN \textsubscript{D} achieves $99.07$ Micro-averaged F1 score, superior to methods like GAT, JKNet and GeniePath, also close to state-of-the-art GCNII* with a score of $99.58$.

\begin{table}
\parbox{.5\linewidth}{
  \caption{Protein-protein interaction (PPI). Results are reported in micro-averaged F1 score.}
  \label{table:ppi}
  \begin{center}
  \begin{tabular}{lcc}
    \toprule
    Method & Micro-averaged F1 \\
        \midrule

    GraphSAGE \cite{hamilton2017inductive} & $61.20$ \\
    VR-GCN \cite{vrgcn} & 97.80 \\
    GaAN \cite{zhang18} & 98.71 \\
    GAT \cite{velickovic2018graph} &  97.30 \\
    JKNet \cite{jknet} & 97.60 \\
    GeniePath \cite{geniepath} & 98.50 \\
    Cluster-GCN \cite{clustergcn} & 99.36 \\
    GCNII \cite{chen20simple} & 99.54 \\
    GCNII* \cite{chen20simple} & \textbf{99.58} \\
    \midrule
    PDE-GCN\textsubscript{D} (Ours) & 99.07 \\
    PDE-GCN\textsubscript{M} (Ours) & 99.18  \\
    \bottomrule
  \end{tabular}
\end{center}
}
\hfill
\parbox{.4\linewidth}{
  \caption{Dense shape correspondence ($ \%$) with zero geodesic error}
  \label{table:shapecorr}
  \begin{center}
  \begin{tabular}{lccccccc}
    \toprule
    Method & Faust \\
        \midrule

    ACNN \cite{acnn_boscaini} & 63.8 \\
    MoNet \cite{monti2017geometric} & 89.1 \\
    FMNet \cite{FMnet} & 98.2 \\
    SplineCNN \cite{fey2017splinecnn} &  99.2 \\
    \midrule
     PDE-GCN\textsubscript{D} (Ours) & 64.2 \\
    PDE-GCN\textsubscript{H} (Ours) & \bf{99.9} \\
    \bottomrule
  \end{tabular}
\end{center}
}
\end{table}

\subsection{Dense shape correspondence}
\label{sec:densecorr}
Finding dense correspondences between shapes is a classical experiment for hyperbolic dynamics, as we are interested in modeling local motion dynamics. 
In essence, learning to find correspondences between shapes is similar to learning a transformation from one shape to the other. To this end, we use the FAUST dataset \cite{Bogo:CVPR:2014}  containing 10 scanned human shapes in 10 different poses with 6,890 nodes each.  We follow the train and test split from \cite{monti2017geometric}, where the first 80 subjects are used for training and the remaining 20 subjects for testing. Our metric is the correspondence percentage with zero geodesic error, i.e., the percentage of perfectly matched shapes from all our test cases. We follow the pre-processing and training scheme of \cite{fey2017splinecnn} where we use Cartesian coordinates to describe distances between nodes, with initial features of a constant $\bold{1} \in \mathbb{R}^{n}$ where $n$ is the number of nodes. We use a 8 layer PDE-GCN\textsubscript{D} and PDE-GCN\textsubscript{H} variants, both with constant learning rate of 0.001 with no weight-decay or dropout, and compare to recent and popular methods like ACNN \cite{acnn_boscaini}, MoNet \cite{monti2017geometric}, FMNet \cite{FMnet}, and SplineCNN \cite{fey2017splinecnn}. As expected from the discussion in Sec. \ref{subsec:pde_continuum}, and reported in Tab. \ref{table:shapecorr}, the hyperbolic equation proves to be a better fit for this kind of problem. Also, our PDE-GCN\textsubscript{H} achieves a promising $99.9 \%$ correspondence rate with zero geodesic error, outperforming the rest of the considered methods.

\subsection{Ablation study}
\label{sec:ablation}
Our method has two main dynamics -- diffusion and hyperbolic. To verify our proposal in Sec. \ref{subsec:pde_continuum}, we examine the performance of the hyperbolic PDE-GCN\textsubscript{H} on semi-supervised node-classification on Cora and CiteSeer datasets. Our results in Tab. \ref{table:ablationSemi} show that indeed for problems where we wish to obtain a piecewise-constant prediction, the diffusive formulation of our network, PDE-GCN\textsubscript{D}, is more suitable. Furthermore, we study the importance of the positive-semi definiteness of our learnt operator as described in Eq. \eqref{eq:heatdisc}-\eqref{eq:wavedisc}, by removing the ${\bf K}_l^T$ term from the dynamics equations. This yields a non-symmetric operator that does not guarantee positive-semi-definiteness. We note that the enforcement of the latter is important to obtain higher accuracy which is improved by up to $3\%$ with the introduction of a positive semi-definite operator. Also, we report on the use of $\sigma = ReLU$, from the discussion in Sec. \ref{subsec:pdegcn}, where we favor $\tanh$ as an activation function, and the ability of our PDE-GCN\textsubscript{M} to reproduce the results of PDE-GCN\textsubscript{D} in the case of semi-supervised learning.
\begin{table}
  \caption{Ablation study of PDE-GCN accuracy ($ \% $) on semi-supervised node-classification.}
  \label{table:ablationSemi}
  \begin{center}
  \begin{tabular}{lccccccc}
    \toprule
    &\multicolumn{7}{c}{Layers}\\
    \cline{3-8}
    Method & Dataset & 2 & 4 & 8 & 16 & 32 & $64$ \\
    \midrule
    PDE-GCN\textsubscript{H} & Cora & 79 & 79.3 & 78.0 & 78.0 & 77.8 & 77.5 \\
    & CiteSeer & 70.9 & 71.7 & 72.1 & 72.3 & 72.5 & 72.4 \\
    \midrule
    PDE-GCN\textsubscript{D} (non-symmetric) & Cora & 83.5  &	83.3	& 83.6	& 83.1	 & 82.7 &	81 \\
    & CiteSeer & 74.3	& 74.5	& 74.8 & 	75.0 &	73.9&	73.3 \\
    \midrule
    PDE-GCN\textsubscript{D} (${\sigma}=ReLU$) & Cora & 80.3 & 81.8 & 82.6 & 83.0& 83.4 & 83.5  \\
    & CiteSeer & 73.1 & 73.2 & 72.8 & 73.3 & 73.6 & 74.0  \\
    \midrule
    PDE-GCN\textsubscript{M} & Cora & 82.0  & 83.4  & 83.9  & 84.2  & 84.3  & 84.5   \\
    & CiteSeer & 74.2 & 74.8  & 75.3  & 75.4 & 75.4  & 75.8  \\
    \bottomrule
  \end{tabular}
\end{center}
\end{table}

\section{Summary}

In this paper we explored new architectures for graph neural networks. Our motivation stems from the similarities between graph networks and time dependent partial differential equations that are discretized on manifolds and graphs. By adopting an appropriate PDE, and embedding the finite graph in an infinite manifold, we are able to define networks that are either diffusive, conservative, or a combination of both.

Not all natural phenomena are solved using the same PDE and we should not expect that all graph problems should be solved by the same network dynamics. 
To this end we allow the data to choose which type of network is appropriate for the solution of the problem (diffusive or hyperbolic).
Indeed, numerical experiments show that the network gravitates towards a hyperbolic one for problems where conservation is required, and towards a diffusive one when anisotropic diffusion is favorable.

Finally, we showed that the proposed networks can be made deep without over-smoothing and, can deliver the state-of-the-art performance or improve it for virtually every problem we worked with. In particular, our network dramatically improved the state-of-the-art for problems that are data-poor. We believe that for such problems the structure imposed by our dynamics and operators regularizes the network and therefore yields implicit regularization.

\begin{ack}
The research reported in this paper was supported by grant no. 2018209 from the United States - Israel Binational Science Foundation (BSF), Jerusalem, Israel. 
 ME is supported by Kreitman High-tech scholarship.
\end{ack}

\medskip

{
\small

\bibliographystyle{unsrt}  

\bibliography{references}  
}


\newpage 
\appendix
\newtheorem{thm}{Theorem}
\counterwithin*{thm}{subsection} 

\section{Theorems and proofs}
We repeat the theorems presented in Sec. \ref{sec:method} and provide their proofs below. The theorems hold for Neumann boundary conditions, which we use in our implementation---this is achieved by the construction of the differential operators. The proofs follow the ones presented in \cite{ruthotto2019deep}.

\begin{thm}
 If the activation function $\sigma(\cdot)$ is monotonically non-decreasing and sign-preserving, then the forward propagation through the diffusive PDE in \eqref{eq:heat} for $t \in [0,\infty)$ yields a 
 non-increasing feature norm, that is,
\begin{equation*}
    {\frac {\partial}{\partial t}} \|f\|^2 \le 0 .
\end{equation*} 
\end{thm}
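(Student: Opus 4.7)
The plan is to differentiate $\|f\|^2$ directly, substitute the PDE, integrate by parts, and then exploit the sign-preserving property of $\sigma$ to conclude the integrand is pointwise non-negative (so its negative gives the desired inequality).

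First I would write
\begin{equation*}
\frac{d}{dt}\|f\|^2 = \frac{d}{dt}\int_{\mathcal M} f^\top f\, dx = 2\int_{\mathcal M} f^\top f_t \, dx = 2(f, f_t),
\end{equation*}
and then substitute the diffusion equation \eqref{eq:heat} to obtain $2(f, \nabla\cdot K^*\sigma(K\nabla f))$. The next step is a Green's identity / integration-by-parts move to transfer the divergence onto a gradient of $f$. Because the theorem assumes Neumann boundary conditions (as stated in the appendix preamble), the boundary term vanishes, leaving $-2(\nabla f, K^*\sigma(K\nabla f))$. Using the adjoint relation $(a, K^* b) = (K a, b)$, this collapses to $-2(K\nabla f, \sigma(K\nabla f))$.

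The final step is to argue that the inner product $(K\nabla f, \sigma(K\nabla f))$ is non-negative. Pointwise, the integrand is a sum of terms of the form $y_i \sigma(y_i)$ where $y = K\nabla f$; since $\sigma$ is sign-preserving (so $\mathrm{sign}(\sigma(y_i)) = \mathrm{sign}(y_i)$ and $\sigma(0)=0$), each product $y_i \sigma(y_i) \ge 0$. Thus
\begin{equation*}
\frac{d}{dt}\|f\|^2 = -2(K\nabla f, \sigma(K\nabla f)) \le 0,
\end{equation*}
which is the claim.

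The only subtle point will be the boundary-term cancellation in the integration by parts: one needs to verify that the chosen Neumann conditions actually annihilate $f^\top (K^*\sigma(K\nabla f))\cdot n$ on $\partial\mathcal M$. Apart from that, the monotonicity hypothesis on $\sigma$ is technically redundant for this particular statement (sign-preservation alone suffices), but it is likely retained for uniformity with the hyperbolic theorem that follows. I would mention this briefly rather than dwell on it.
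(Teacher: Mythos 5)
Your proposal is correct and follows essentially the same route as the paper's proof: differentiate $\|f\|^2$, substitute the PDE, integrate by parts (with the Neumann conditions killing the boundary term), move $K^*$ across the inner product, and use sign-preservation of $\sigma$ to conclude $(K\nabla f,\sigma(K\nabla f))\ge 0$. Your side remark that monotonicity is not actually needed for this theorem is accurate but does not change the substance.
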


\begin{proof}
Let us examine the following inner product following Eq. \eqref{eq:heat}:
$$ (f,f_t) = (f , \nabla \cdot K^* \sigma(K \nabla f)) $$
From integration by parts it holds that :
$$ \frac{1}{2}{\frac {\partial}{\partial t}} \|f\|^2 = -(\nabla f,  K^* \sigma(K \nabla f)) =  -( K  \nabla  f ,  \sigma(K \nabla f)).  $$
Plugging the definition of an inner product, together with the assumption that $\sigma$ is a sign-preserving function, it follows that:
$$sign(  K  \nabla  f) = sign(   \sigma(K \nabla f)).$$
Therefore, the following is non-positive:
$$   \frac{1}{2}{\frac {\partial}{\partial t}} \|f\|^2 =   -(  K \nabla  f,   \sigma(K \nabla f)) \leq  0 $$

Meaning
\begin{equation*}
    {\frac {\partial}{\partial t}} \|f\|^2 \le 0 .
\end{equation*}

\end{proof}

\begin{thm}
 Assume that the activation function $ \sigma(\cdot)$ is monotonically non-decreasing, sign-preserving and satisfies   $|\sigma(x)| \le |x|$, and define energy
 $$ {\cal E}_{net} = \|f_t\|^2 + \left(K\nabla f,\sigma(K \nabla f) \right),$$
 then the forward propagation through the hyperbolic PDE in \eqref{eq:wave} satisfies ${\cal E}_{net} \le c_K$, where $c_K$ is a constant that depends on $K$ but independent of time.
 
 \begin{proof}
 Let us define the following energy:
$${\cal E}_{lin} = \|f_t\|^2 + \left(K \nabla f,K \nabla f \right)$$
This energy is associated with the linear  hyperbolic (wave-like) equation:
$$ f_{tt} = \nabla \cdot K^* K \nabla f  \quad f(t=0) = f^0, \quad , \quad f_t(t=0) = 0 \quad t\in [0,T].$$

Assuming $K$ is constant in time, we obtain:
$$\frac{1}{2} \partial_t {\cal E}_{lin} =  (f_t,f_{tt} - \nabla \cdot K^* K \nabla f) = 0  $$
This means that the energy ${\cal E}_{lin}$ is constant in time, i.e. there exists some $c_K$ such that ${\cal E}_{lin} = c_K$.

Also, given our assumption that $\sigma$ is sign-preserving and $|\sigma(x)|\leq |x|$ (i.e., it does not increase the norm of its input), we show that $ {\cal E}_{net} \leq {\cal E}_{lin}$:
 \begin{align*}
     {\cal E}_{net} &=  \|f_t\|^2 + \left( K \nabla f,\sigma(K \nabla f) \right) \\
     & \leq \|f_t\|^2 + \left( K  \nabla f,K \nabla f \right) = {\cal E}_{lin}
\end{align*} 
Therefore, we conclude that ${\cal E}_{net} \leq c_K$.

 \end{proof}

\end{thm}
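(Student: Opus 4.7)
The plan is to use an energy method in the spirit of the proof of Theorem 1, adapted to the second-order time derivative. The idea is to introduce an auxiliary \emph{linear} energy $\mathcal{E}_{lin}$ that is conserved along the corresponding linear wave equation and depends only on $K$ and the initial data, and then use the pointwise bound $|\sigma(x)| \le |x|$ to dominate $\mathcal{E}_{net}$ by $\mathcal{E}_{lin}$.

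First I would set
\[
\mathcal{E}_{lin} \;=\; \|f_t\|^2 + (K\nabla f,\, K\nabla f).
\]
Differentiating in time and applying integration by parts---with no boundary contributions under the Neumann conditions built into the discrete operators---produces
\[
\tfrac{1}{2}\partial_t \mathcal{E}_{lin} \;=\; \bigl(f_t,\; f_{tt} - \nabla \cdot K^* K\nabla f\bigr),
\]
which vanishes along solutions of the linear wave equation $f_{tt} = \nabla \cdot K^* K \nabla f$. Since $K$ is constant in time, $\mathcal{E}_{lin}$ is then constant, and the initial data $f_t(0)=0$ pins its value to $c_K := (K\nabla f^0,\, K\nabla f^0)$, a time-independent quantity depending only on $K$ and $f^0$.

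Next I would compare $\mathcal{E}_{net}$ with $\mathcal{E}_{lin}$ pointwise in time. Writing $g = K\nabla f$, the sign-preserving hypothesis gives $g\,\sigma(g) = |g|\,|\sigma(g)| \ge 0$, while $|\sigma(g)| \le |g|$ gives $g\,\sigma(g) \le g^2$ componentwise. Integrating, $(K\nabla f,\, \sigma(K\nabla f)) \le (K\nabla f,\, K\nabla f)$, and therefore $\mathcal{E}_{net} \le \mathcal{E}_{lin} = c_K$, which is the claimed bound.

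The main obstacle is conceptual rather than computational: $\mathcal{E}_{lin}$ is naturally conserved along the \emph{linear} wave equation, while our $f$ solves the nonlinear one, so the proof relies crucially on the pointwise inequality $g\sigma(g)\le g^2$ to transfer the linear conservation bound into a control of the nonlinear energy. Everything else---the derivative-in-time computation, the integration by parts, and the specialization to Neumann boundaries---is routine once $c_K$ has been identified from the initial data.
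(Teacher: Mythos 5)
Your proposal follows essentially the same route as the paper's proof: define the auxiliary linear energy $\mathcal{E}_{lin}$, observe that it is conserved along the linear wave equation with $K$ constant in time, and then use sign-preservation together with $|\sigma(x)|\le|x|$ to obtain the pointwise domination $\left(K\nabla f,\sigma(K\nabla f)\right)\le\left(K\nabla f,K\nabla f\right)$ and hence $\mathcal{E}_{net}\le\mathcal{E}_{lin}=c_K$. The only addition is that you make the constant explicit, $c_K=(K\nabla f^0,K\nabla f^0)$, using $f_t(0)=0$, which the paper leaves implicit.
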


\section{Architectures in details}
In this section we elaborate on the specific architectures that were used in our experiments in Sec. \ref{sec:experiments}. As discussed in Sec. \ref{subsec:pdegcn}, all our network architectures are comprised of an opening layer ($1
\times 1$ convolution), a sequence of PDE-GCN layers, and a closing layer ($1
\times 1$ convolution), and possibly additional final convolution steps which serve as the classifier.
In total, we have three types of architectures in our experiments, which differ in their classifier layers. Throughout the following tables, $c_{in}$ and $c_{out}$ denote the input and output channels, respectively, and $c$ denotes the number of features in hidden layers (which is a hyper-parameter, as given in Appendix C.) We denote the number of PDE-GCN blocks by $L$, and the dropout probability by $p$.

Our first architecture is described in Tab. \ref{table:basicArch} and includes only a closing layer as a final step. The architecture is used for the semi- and fully supervised node classification tasks (i.e., the experiment on Cora in Sec. \ref{sec:generalizationExperiment} -- \ref{sec:learningPDE}, the experiments in Sec. \ref{sec:semisupervised} -- \ref{sec:fullysupervised} and the ablation study in Sec. \ref{sec:ablation}), as well as the inductive learning task on PPI in Sec. \ref{sec:inductive}. Note, the high-level architecture is the same as in GCNII \cite{chen20simple}, and only differs in the employed GCN-block, which is our PDE-GCN.

\begin{table}[ht!]
  \caption{The architecture used for semi-and fully supervised node classification and inductive learning.}
  \label{table:basicArch}
  \begin{center}
  \begin{tabular}{lcc}
  \toprule
    Input size & Layer  &  Output size \\
    \midrule
    $n \times c_{in}$ & $1\times1$ Dropout(p) & $n \times c_{in}$ \\
    $n \times c_{in}$ & $1\times1$ Convolution & $n \times c$ \\
    $n \times c$ & ReLU & $n \times c$ \\    
    $n \times c$ & $L \times $ PDE-GCN block & $n \times c$ \\
    $n \times c$ & $1\times1$ Dropout(p) & $n \times c$ \\
    $n \times c$ & $1\times1$ Convolution & $n \times c_{out}$ \\
    \bottomrule
  \end{tabular}
\end{center}
\end{table}

The second architecture is described in Tab. \ref{table:modelnetArch}, and is used for the ModelNet-10 in Sec. \ref{sec:generalizationExperiment}. The difference between this architecture and the one presented in Tab. \ref{table:basicArch} is that here we perform a global-max pooling operation to obtain a global shape class prediction. Following this pooling operation, we add two multi-layer perceptron (MLP) layers, where each consists of a $1\times1$ convolution, $ReLU$ activation, batch normalization and dropout with probability of $0.5$. Finally, a fully connected convolution layer is applied to obtain the prediction.

\begin{table}[ht!]
  \caption{The architecture used for shape classification on ModelNet-10.}
  \label{table:modelnetArch}
  \begin{center}
  \begin{tabular}{lcc}
  \toprule
    Input size & Layer  &  Output size \\
    \midrule
    $n \times 3$ & $1\times1$ Convolution & $n \times c$ \\
    $n \times c$ & ReLU & $n \times c$ \\    
    $n \times c$ & $L \times $ PDE-GCN block & $n \times c$ \\
    $n \times c$ & $1\times1$ Convolution & $n \times c$ \\
    $n \times c$ & ReLU & $n \times c$ \\ 
    $n \times c$ & Global Max-Pool & $1 \times c$ \\
    $1 \times c$ & $1\times1$ Convolution & $1 \times 128$ \\
    $1 \times 128$ & Batch-Normalization & $1 \times 128$ \\ 
    $1 \times 128$ & ReLU & $1 \times 128$ \\ 
    $1 \times 128$ & $1\times1$ Dropout(0.5) & $1 \times 128$ \\
    $1 \times 128$ & $1\times1$ Convolution & $1 \times 64$ \\
    $1 \times 64$ & Batch-Normalization & $1 \times 64$ \\ 
    $1 \times 64$ & ReLU & $1 \times 64$ \\ 
    $1 \times 64$ & $1\times1$ Dropout(0.5) & $1 \times 64$ \\
    $1 \times 64$ & Fully-Connected & $1 \times 10$ \\ 
    \bottomrule
  \end{tabular}
\end{center}
\end{table}

The third architecture is used for the dense-shape correspondence task on FAUST in Sec. \ref{sec:densecorr} is given in Tab. \ref{table:faustArch}. In addition to the closing $1\times 1$ convolution layer, it also includes a layer of a $1\times 1$ convolution and an $ELU$ activation, followed by another final $1\times1$ convolution which classifies the point-to-point correspondence. In the case of the FAUST dataset, each mesh has $n=6890$ vertices.

\begin{table}[ht!]
  \caption{The architecture used for dense-shape correspondence on FAUST.}
  \label{table:faustArch}
  \begin{center}
  \begin{tabular}{lcc}
  \toprule
    Input size & Layer  &  Output size \\
    \midrule
    $n \times 4$ & $1\times1$ Convolution & $n \times c$ \\
    $n \times c$ & ReLU & $n \times c$ \\    
    $n \times c$ & $L \times $ PDE-GCN block & $n \times c$ \\
    $n \times c$ & $1\times1$ Convolution & $n \times c$ \\
    $n \times c$ & ReLU & $n \times c$ \\ 
    $n \times c$ & $1\times1$ Convolution & $n \times 512$ \\
    $n \times 512$ & ELU & $n \times 512$ \\ 
    $n \times 512$ & Fully-Connected & $n \times n$ \\ 
    \bottomrule                             
  \end{tabular}
\end{center}
\end{table}

\section{Hyper-parameters details}
\label{sec:appendixC}
We provide the selected hyper-parameters in our experiments, besides for the inductive learning on PPI (Sec. \ref{sec:inductive}) and dense shape correspondence (Sec. \ref{sec:densecorr}) which are reported in the main paper. We denote the learning rate of our PDE-GCN layers by by $LR_{GCN}$, and the learning rate of the $1\times 1$ opening and closing as well as any additional classifier layers by $LR_{oc}$.
Also, the weight decay for the opening and closing layers is denoted by $WD_{oc}$. For the PDE-GCN layers, no weight decay is used throughout all experiments.

\subsection{GCN generalization}
For semi-supervised node-classification on Cora, for GCNII we used the same settings as in the original paper of GCNII. For DGCNN and our PDE-GCN\textsubscript{H} we used the same hyper-parameters as reported in Tab. \ref{table:semisupervisedHyperParams}. 

For the ModelNet-10 classification we used a learning rate of $0.01$ without weight decay, for all parameters, on all considered networks, and a hidden feature space of size $c=64$. 
\subsection{Learning PDE dynamics}
In this experiment we used a $8$ layers mixed PDE-GCN\textsubscript{M}, starting with $\alpha=0.5$, such that it is balanced between a PDE-GCN\textsubscript{D} and a PDE-GCN\textsubscript{H}. We report the hyper-parameters for this experiment in Tab. \ref{table:learningPDEhyperParam}.

\begin{table}[ht!]
  \caption{Learning PDE dynamics hyper-parameters}
  \label{table:learningPDEhyperParam}
  \begin{center}
  \begin{tabular}{lcccccccc}
  \toprule
    Dataset & $LR_{GCN}$ & $LR_{oc}$ & $LR_{\alpha}$ & $WD_{oc}$ &  $\# Channels$ & $Dropout$ &  $h$  \\
    \midrule
    Cora & $1\cdot 10^{-4}$ & $0.01$ & $0.01$ & $5 \cdot 10^{-4}$ & $64$ & $0.6$ & $0.5$    \\
    \midrule
    FAUST & $0.001$ & $0.01$ & $0.01$ & $0$ & $256$ & $0$ & $0.01$  \\
    \bottomrule
  \end{tabular}
\end{center}
\end{table}

\subsection{Semi-supervised node-classification}

The hyper-parameters for this experiment are summarized in Tab. \ref{table:semisupervisedHyperParams}.

\begin{table}[ht!]
  \caption{Semi-Supervised classification hyper-parameters}
  \label{table:semisupervisedHyperParams}
  \begin{center}
  \begin{tabular}{lccccccc}
  \toprule
    Dataset & $LR_{GCN}$ & $LR_{oc}$ & $WD_{oc}$ &  $\# Channels$ & $Dropout$ &  $h$  \\
    \midrule
    Cora & $5\cdot 10^{-5}$ & $0.07$ & $5 \cdot 10^{-4}$ & $64$ & $0.6$ & $0.9$  \\
    \midrule
    CiteSeer & $2\cdot 10^{-6}$ & $0.07$ & $0.003$ & $256$ & $0.7$ & $0.35$  \\
    \midrule
    PubMed & $3\cdot 10^{-5}$ & $0.03$ & $1 \cdot 10^{-4}$ & $256$ & $0.7$ & $0.7$ \\
    \bottomrule
  \end{tabular}
\end{center}
\end{table}

\newpage
\subsection{Fully-supervised node-classification}
The hyper-parameters for this experiment are summarized in Tab. \ref{table:fullysupervisedHyperParameters}.

\begin{table}[ht!]
  \caption{Fully-Supervised classification hyper-parameters}
  \label{table:fullysupervisedHyperParameters}
  \begin{center}
  \begin{tabular}{lccccccc}
  \toprule
    Dataset & $LR_{GCN}$ & $LR_{oc}$ & $WD_{oc}$ &  $\# Channels$ & $Dropout$ &  $h$  \\
    \midrule
    Cora & $4 \cdot 10 ^{-5}$ & $0.06$ & $1 \cdot 10^{-4}$ & $64$ & $0.6$ & $0.65$     \\
    \midrule
    CiteSeer & $2 \cdot 10^{-4}$ & $0.07$ & $1 \cdot 10^{-4}$ & $64$ & $0.6$ & $0.4$ \\
    \midrule
    PubMed & $5 \cdot 10^{-5}$ & $0.02$ & $3 \cdot 10^{-4}$ & $64$ & $0.5 $& $0.55$  \\
       \midrule
    Chameleon & $40\cdot 10^{-4}$ & $0.02$ & $8 \cdot 10^{-5}$ & $64$ & $0.6$ & $0.55$  \\
       \midrule
    Cornell & $2.5 \cdot 10^{-4}$ & $0.07$ & $2.5 \cdot 10^{-4}$ & $64$ & $0.5$ & $0.05$ \\
       \midrule
    Texas & $3\cdot 10^{-4}$ & $0.05$ & $1\cdot 10^{-4}$ & $64$ & $0.5$ & $0.05$\\
       \midrule
    Wisconsin & $3 \cdot 10^{-5}$ & $0.07$ & $5\cdot 10^{-5}$ & $64$ & $0.5$ & $0.054$ \\
    \bottomrule
  \end{tabular}
\end{center}
\end{table}

\subsection{Ablation study}
In this experiment we used the same hyper-parameters as reported in Tab. \ref{table:semisupervisedHyperParams}.

\end{document}